\documentclass{article}
\PassOptionsToPackage{numbers, compress}{natbib}
\usepackage[preprint]{neurips_2025}

\usepackage[utf8]{inputenc} 
\usepackage[T1]{fontenc}    
\usepackage{url}            
\usepackage{amssymb,amsfonts,amsmath,amsthm} 
\usepackage{enumerate,enumitem,tikz,graphicx,mathrsfs,eucal,verbatim, bbm, derivative}
\usepackage{tikz, tikz-cd}
\usepackage{svg}
\usepackage{wasysym}
\usepackage{rotating}
\usepackage{hyperref}     
\usepackage{subcaption}
\usepackage{booktabs}
\usepackage{stmaryrd}
\usepackage[normalem]{ulem}

\theoremstyle{plain}
\newtheorem{thm}{Theorem}[section]

\newtheorem{conjecture}[thm]{Conjecture}

\newtheorem{exmp}[thm]{Example}
\newtheorem{prop}[thm]{Proposition}
\newtheorem{lemm}[thm]{Lemma}
\newtheorem{cor}[thm]{Corollary}
\theoremstyle{definition}
\newtheorem{defn}{Definition}
\theoremstyle{remark}
\newtheorem{rmk}{Remark}[section]

\newcommand{\symm}{\textnormal{Sym}}
\newcommand{\symProd}{\textnormal{Sym}}

\let\svthefootnote\thefootnote
\newcommand\freefootnote[1]{%
  \let\thefootnote\relax%
  \footnotetext{\hspace{-1.5em}#1}%
  \let\thefootnote\svthefootnote%
}

\title{Singular Learning and Occam's Razor \\ in Deep Monomial Networks}

\author{
Kathl\'en Kohn \\
KTH Stockholm \\
Digital Futures \\
\texttt{kathlen@kth.se}
\And
Giovanni Luca Marchetti \\
KTH Stockholm \\
\texttt{glma@kth.se} 
\And
 Farhan Shabir \\
No Affiliation\\
\texttt{fshabir@emory.edu}
\And
Vahid Shahverdi \\
Umeå University \\
\texttt{vahid.shahverdi@umu.se}
\And
Weisheng Wang\\
Utrecht University\\
\texttt{w.wang2@uu.nl}
}

\begin{document}

\maketitle

\begin{abstract}
In the optimization of neural networks, gradient dynamics are influenced by critical points that arise from the model's architecture. These critical points occur where the Jacobian of the model's parametrization is rank-deficient, and are the most pronounced singularities studied in Singular Learning Theory. We investigate such points in deep fully-connected networks with monomial activations  via tools from polynomial algebra such as Mason's Theorem. We show that, for sufficiently large activation degree, criticality occurs precisely at subnetworks, i.e., at parameter configurations where some neurons are inactive or redundant. This offers a mathematical perspective on the implicit bias in deep neural networks, explaining the tendency of these models to converge toward simpler functions. 
\end{abstract}

{\small \textbf{Keywords:} critical points, neuromanifolds, singular learning theory } \\ 

\section{Introduction}\label{sec:intro}
Deep neural networks are typically trained by following (a discretization of) the gradient flow of their objective. The equilibria of this dynamics are the \emph{critical points} of the objective, i.e., parameters where the gradient vanishes. These can be local optima (maxima and minima), or possibly-degenerate saddles. Interestingly, some critical points
are induced by the model and its architecture \cite{trager2019pure}, rather than depending directly on the given data or objective. They arise when the \emph{parametrization} of the model exhibits a Jacobian  that is rank-deficient. At these critical parameters, the Riemannian metric induced by the model -- typically, the Fisher Information Metric -- drops rank. Based on this, such parameters are also referred to as `singularities', and are the central focus of Singular Learning Theory (SLT) \cite{watanabe2009algebraic}. According to SLT, critical points of the parametrization play a role of implicit biases in learning, 
from both dynamical and Bayesian perspective \cite{wei2022deep}. 

In this work, we study the critical points of the parametrization of a certain class of neural networks. Namely, we consider deep fully-connected networks with a monomial activation function -- a  setting introduced in \cite{kileel2019expressive}, and later developed in several works \cite{finkel2024activation, usevich2025identifiability, kubjas2024geometry, massarenti2025alexander,alexandr2026algebraic,dao2026minimal}. For these models, the end-to-end function is a (homogeneous) multivariate polynomial. This makes them particularly amenable to theoretical analysis, since it is possible to apply  powerful tools of polynomial algebra. In fact, this work fits into a broad research program aimed at bridging algebra and algebraic geometry with deep learning -- a field recently termed as Neuroalgebraic Geometry \cite{marchetti2025invitationneuroalgebraicgeometry}. 

As a main result, we prove that, for sufficiently large activation degree, the  critical points of the parametrization are precisely \emph{subnetworks}, i.e., parameters of the network where some neurons are redundant or inactive. Equivalently, rank deficiency occurs exactly at parameter configurations where some hidden neuron can be removed without changing the realized function. 
In light of the relation between critical points and implicit biases, this can be interpreted as providing a mathematical perspective on the tendency of deep neural networks to converge, during training, towards `simple' functions -- more specifically, towards configurations where a number of neurons can be pruned. This tendency is a crucial behavior, suggesting that deep neural networks implement, implicitly, a form of the Occam's Razor.

Our result complements the main findings of \cite{finkel2024activation}, where it is shown that subnetworks are the only non-identifiable parameters\footnote{This statement is stronger than the original phrasing in \cite{finkel2024activation}, but follows from the same proof -- see Section \ref{sec:backident}.}, meaning that they admit larger symmetries than expected. Our proof strategy, however, differs from \cite{finkel2024activation}. While the latter relies on a polynomial version of Fermat's Last Theorem due to Newman-Slater \cite{newman1979waring}, we leverage on a stronger tool: Mason's polynomial version of the $abc$-conjecture from number theory. By combining Mason's Theorem with a subtle divisibility analysis of the polynomials defined by the network, we are able to describe the kernel of the Jacobian of the parametrization.

\section{Related Work}
\label{sec:relwork}

\paragraph{Algebraic geometry of deep learning.} As anticipated, a line of research in theoretical deep learning -- recently termed 
Neuroalgebraic Geometry \citep{marchetti2025invitationneuroalgebraicgeometry} -- explores the study of polynomial neural networks through the lens of algebraic geometry. Motivated by the fact that polynomials can approximate arbitrary continuous functions, the goal of Neuroalgebraic Geometry is to approach fundamental  problems in machine learning via tools from algebra and geometry. One such problem is identifiability, which represents a core focus of neuroalgebraic geometry. Several models have been  analyzed in the literature, ranging from deep fully-connected networks with linear \citep{trager2019pure, marchetti2025critical}, monomial \citep{kileel2019expressive, finkel2024activation, kubjas2024geometry, marchetti2024harmonics, massarenti2025alexander}, and general polynomial \citep{shahverdi2026learning, shahverdi2026identifiable} activations, to deep convolutional  networks with linear  \citep{kohn2022geometry, kohn2024function, shahverdi2024algebraic}, monomial  \citep{shahverdi2024geometryoptimizationpolynomialconvolutional, hendi2026geometry} and polynomial activations \cite{shahverdi2026learning}, to (un-normalized) attention-based networks \citep{henry2024geometrylightningselfattentionidentifiability}. Our work naturally fits into this line of research, since we consider fully-connected networks with monomial activations of large degree. We contribute by describing the critical points of the parametrization -- one of the basic questions in Neuroalgebraic Geometry, that has been answered only for linear fully-connected \cite{trager2019pure} and polynomial convolutional \cite{kohn2024function, shahverdi2024geometryoptimizationpolynomialconvolutional} networks.

\paragraph{Singular learning theory (SLT).} SLT is a discipline in theoretical machine learning focused on \emph{singularities}\footnote{The notion of singularity in SLT does not fully coincide with the one in algebraic geometry -- see \cite{shahverdi2026learning} for details.} -- points where the Fisher Information Metric induced in parameter space by a given model degenerates \citep{watanabe2009algebraic, watanabe2007almost, amari2003learning, wei2022deep}. According to SLT, singularities induce biases in the learning process. 
This has been analyzed, mathematically, both from a dynamical \cite{wei2008dynamics} and probabilistic \cite{watanabe2013widely} perspective. Now, singularities are closely related to critical points of the parametrization -- we expand on this in Section \ref{sec:relslt}. Thus, our work contributes to the SLT picture by relating singularities to subnetworks (for deep monomial networks).
As mentioned in Section \ref{sec:intro}, this can be interpreted as a mathematical justification of the implicit simplicity bias of contemporary models.

\paragraph{Implicit sparsity bias.} 
The tendency of deep neural networks to converge to sparse weights is well-established. Empirically, this phenomenon has been popularized by the celebrated `lottery ticket hypothesis' \cite{frankle2018lottery}. Theoretically,
a line of research \citep{woodworth2020kernel, nacson2022implicit,pesme2021implicit, andriushchenko2023sgd} has shown that, for deep diagonal linear networks, the training dynamics, when initialized around the origin, implicitly penalizes the $\ell_1$ norm of the weights, inducing sparsity in the representation. A similar bias towards low-rank solutions has been shown for deep linear networks trained with a regularized loss \citep{kunin2019loss, ziyin2022exact, wang2023implicit}. Closely related to our work, these sparsity biases have been reformulated in terms of subnetworks \citep{chen2023stochastic}, and even analyzed dynamically in simple scenarios \cite{kunin2026alternating, marchetti2026sequential}. While our analysis is not dynamical, our work contributes to sedimenting the fundamental role of subnetworks in the learning process, for an interesting class of models (deep polynomial networks).

\section{Background}
In this section, we overview the basic notions and results around deep polynomial neural networks.

\subsection{Deep Polynomial Networks and their Neuromanifolds}

Fix a function $\sigma \colon \mathbb{R} \rightarrow \mathbb{R}$, a sequence of $L > 1$ positive integers $d_0, \ldots, d_L$, and, for every $i=1, \ldots, L$, a matrix $W_i \in \mathbb{R}^{d_i \times d_{i-1}}$.  
\begin{defn}
A \emph{Multi-Layer Perceptron} (MLP) with architecture $\mathbf{d} = (d_0, \ldots, d_L)$, activation function $\sigma$ and weights $\mathbf{W} = (W_1, \ldots, W_{L})$ is the map $f_\mathbf{W} \colon \mathbb{R}^{d_0} \rightarrow \mathbb{R}^{d_L}$ given by the composition:
\begin{equation}\label{eq:mlpdef}
f_\mathbf{W} = W_{L} \circ \sigma \circ \cdots \circ \sigma \circ W_1,
\end{equation}
where $\sigma$ is applied coordinate-wise.
\end{defn}

We now introduce the function spaces parametrized by neural networks. Let $\mathcal{W} = \bigoplus_{i=1}^L \mathbb{R}^{d_i \times d_{i-1}}$ be the parameter space of an MLP, and  $\varphi \colon \mathcal{W} \ni \mathbf{W} \mapsto f_\mathbf{W}$ be its parametrization map. 

\begin{defn}
The \emph{neuromanifold} of an MLP with architecture $\mathbf{d}$ and activation function $\sigma$ is the image of the parametrization $\varphi$, i.e.,
\begin{equation}
\mathcal{M}_{\mathbf{d}, \sigma} = \left\{f_\mathbf{W} \ | \  \mathbf{W} \in  \mathcal{W} \right\}.
\end{equation}
\end{defn}
When $\sigma$ is a polynomial, the Tarski-Seidenberg Theorem implies that $\mathcal{M}_{\mathbf{d}, \sigma}$ is a \emph{(semi-)algebraic variety} -- the central object of real algebraic geometry. As discussed in Section \ref{sec:relwork}, this is the starting point of Neuroalgebraic Geometry. In this work, we focus on \emph{monomial activations}, meaning that $\sigma(z) = z^r$ for some $r \in \mathbb{Z}_{\geq 1}$, whose neuromanifold we denote by $\mathcal{M}_{\mathbf{d}, r}$. The study of this type of networks has been initiated by \cite{kileel2019expressive}, and has since become a cornerstone of Neuroalgebraic Geometry -- see Section \ref{sec:relwork}. In this case, $f_\mathbf{W}$ is a homogeneous polynomial of degree $r^{L-1}$. We denote by $\symm_{m}(d, d')$ the space of homogeneous polynomials of degree $m$ with $d$ inputs and $d'$ outputs. We then have the inclusion $\mathcal{M}_{\mathbf{d}, r} \subseteq \symm_{r^{L-1}}(d_0, d_L)$. Thus, the neuromanifold is contained in a finite-dimensional \emph{ambient space} $\mathcal{V}:= \symm_{r^{L-1}}(d_0, d_L)$.

\subsection{Subnetworks and Identifiability}
\label{sec:backident}
We now introduce the notion of a subnetwork of an MLP, which will be central in what follows. 
\begin{defn}
\label{def:subnet}
Let $0 < j < L$ and $1 \leq i \leq d_j$. The $i$-th neuron at the $j$-th layer is: 
\begin{itemize}
  \item \emph{inactive}, if its incoming or outcoming weights vanish, i.e., $W_j[i, :] =0$ or $W_{j+1}[:, i] =0$,
  \item \emph{redundant}, if $W_j[i, :]$ is non-vanishing, but proportional to $W_j[k, :]$ for some $1 \leq k \leq d_j$, $k \neq i$.  
\end{itemize}
The weights $\mathbf{W}$ are a \emph{subnetwork} if they admit at least one inactive
or redundant neuron. 
\end{defn}   
If $\mathbf{W}$ is a subnetwork, it is possible to remove inactive and redundant neurons, obtaining an MLP with a smaller architecture that defines the same function
$f_\mathbf{W}$. This is achieved by simply removing the $i$-th row of $W_j$ and the $i$-th column of $W_{j+1}$, thereby decreasing $d_j$ by $1$. For inactive neurons, this works on the spot, since they do not participate in the computation of the function, due to the vanishing of the corresponding weights. For redundant ones, the removal needs to be compensated by the relevant columns of $W_{j+1}$. Namely, if $W_j[i, :] = \lambda W_j[k, :]$, then $W_{j+1}[:,k]$ must be replaced by $W_{j+1}[:,k] + \lambda^{r} W_{j+1}[:, i]$. By iterating these procedures, all the inactive/redundant neurons can be removed. Independently of the order of the removal, this leads to an MLP with non-subnetwork weights $\mathbf{W}'$ and architecture $(d_0, d_1', \ldots, d_{L-1}', d_L)$, $d_i' \leq d_i$, such that $f_{\mathbf{W}'} = f_\mathbf{W}$. 

We illustrate subnetworks and the removal procedure in the following small example.
\begin{exmp}
    Consider the architecture $\mathbf{d}=(2,2,1)$ with activation $\sigma(z)=z^2$, and weights $\mathbf{W}=(W_1,W_2)$. The end-to-end function is
\begin{equation}
    f_{\mathbf{W}}(x) = W_2[1,1](W_1[1,:]x)^2 + W_2[1,2](W_1[2,:]x)^2.
\end{equation}
If the second hidden neuron is inactive, meaning that $W_1[2,:]=0$ or $W_2[1,2]=0$, then its contribution to $f_{\mathbf{W}}$ vanishes. Hence, we can remove the second row of $W_1$ and the second column of $W_2$, obtaining a network with architecture $\mathbf{d}'=(2,1,1)$ and the same end-to-end function.

If instead a hidden neuron is redundant, then $W_1[2,:]=\lambda W_1[1,:]$ for some scalar $\lambda$. In this case, the second term in $f_{\mathbf{W}}$ becomes $\lambda^2W_2[1,2](W_1[1,:]x)^2$, and can be absorbed into the first neuron. Thus, after removing the second hidden neuron, we can replace $W_2[1,1]$ by $W_2[1,1]+\lambda^2W_2[1,2]$, and the resulting smaller network realizes the same function.
\end{exmp}

We now review the main result from \cite{finkel2024activation}, addressing a conjecture from \cite{kileel2019expressive}. The result concerns the parameter identifiability problem for MLPs with a monomial activation of large degree, by describing the generic fibers\footnote{For a fixed function $f_{\mathbf{W}}$, the fiber of the parametrization over $f_{\mathbf{W}}$ is the set $\varphi^{-1}(f_{\mathbf{W}})=\{\mathbf{V}\in \mathcal{W} \mid f_{\mathbf{V}}=f_{\mathbf{W}}\}$. Here, generic means that the statement holds for parameters $\mathbf{W}$ outside a proper algebraic subset of $\mathcal{W}$.} of the parametrization map $\varphi$. Specifically, it is established that, for large degree $r \gg 0$ and generic weights $\mathbf{W}$, the function $f_\mathbf{W}$ determines $\mathbf{W}$ up to permutations and rescalings of neurons. The proof from \cite{finkel2024activation} actually shows that this is true exactly when $\mathbf{W}$ is not a subnetwork. For the sake of completeness, we discuss the result in this stronger form in Section \ref{sec:app_ident}. As a consequence of this result, by applying the fiber-dimension theorem from algebraic geometry, we deduce that the dimension of the neuromanifold is:
\begin{equation}
    \dim \mathcal{M}_{\mathbf{d}, r} = \dim \mathcal{W} - \dim \varphi^{-1}(f_{\mathbf{W}}) = d_L + \sum_{i=1}^L d_i(d_{i-1}-1),
\end{equation}
where $\mathbf{W}$ is not a subnetwork.

Note that if $\mathbf{W}$ is a subnetwork, then the fiber $\varphi^{-1}(f_\mathbf{W})$ is larger than the generic one. Since inactive neurons can be removed without affecting $f_\mathbf{W}$, their non-vanishing incoming / outcoming weights can be varied freely (and similarly for redundant neurons). More precisely, if some neuron at the $j$-th layer is redundant or inactive, then $\dim \varphi^{-1} (f_\mathbf{W})$ increases by at least $\min \{ d_{j-1}, d_{j+1} \} - 1$.       

\section{Motivation: Criticalities, and their Role in Learning}
In this section, we introduce the notion of criticality for the parametrization map $\varphi$, and discuss its role in learning, including the relation to SLT. 

Denote by $\textnormal{J}_\mathbf{W}\varphi$ the Jacobian of $\varphi$ at $\mathbf{W} \in \mathcal{W}$, which we interpret as a linear map $\textnormal{J}_\mathbf{W}\varphi \colon \mathcal{W} \rightarrow  \mathcal{V}$. 
\begin{defn}
    The weights $\mathbf{W} \in \mathcal{W}$ are a \emph{regular point} for $\varphi$ if the Jacobian attains its maximal possible rank, i.e., $\textnormal{rk}\; \textnormal{J}_\mathbf{W}\varphi = \textnormal{dim} \mathcal{M}_{\mathbf{d}, r}$. Otherwise, $\mathbf{W}$ is a \emph{critical point}.
\end{defn}
Sard's Theorem implies for the algebraic map $\varphi$ that it is regular almost everywhere in its domain $\mathcal{W}$. However, critical points are, perhaps, the most interesting ones. As mentioned in Section \ref{sec:intro}, the model is typically trained via the gradient flow of an objective. We think of the latter as a (data-dependent) function $\mathcal{L} \colon \mathcal{V} \rightarrow \mathbb{R}$. Then the model follows the gradient:
\begin{equation}
\label{eq:orthojac}
    \nabla_{\mathbf{W}}( \mathcal{L} \circ \varphi )^\top = \nabla_{f_\mathbf{W}} \mathcal{L}^\top  \cdot  \textnormal{J}_\mathbf{W}\varphi. 
\end{equation}
Now, given weights $\mathbf{V} \in \mathcal{W}$ that are annihilated by the Jacobian, i.e. $\mathbf{V} \in \ker \textnormal{J}_\mathbf{W}\varphi$, \eqref{eq:orthojac} immediately implies that  $\nabla_{\mathbf{W}}( \mathcal{L} \circ \varphi )^\top \mathbf{V} = 0$. In other words, the gradient of the loss function is orthogonal to the kernel of the Jacobian of the parametrization. This means that, locally, the training dynamics has no component along the kernel. 
Since critical points have larger kernel, the dynamics in their proximity exhibits larger constraints, determined by the Jacobian. This gives an indication, from a dynamical perspective, of an \emph{implicit bias} induced by critical points of the parametrization. As we shall see in the following, SLT builds upon this perspective, providing a comprehensive theory around closely related notions and phenomena. 

\begin{rmk}\label{rmk:largeFiberImpliesCritical}
  Critical points are closely connected to points that result in fibers  larger than the generic one, as discussed in  Section \ref{sec:backident}.
  More concretely, if the dimension of $\varphi^{-1}(\varphi(\mathbf{W}))$ is larger than the dimension of the generic fiber of $\varphi$, then $\mathbf{W}$ must be a critical point of $\varphi$
     because the tangent space of the fiber lies in the kernel of the Jacobian.
     However, in general, this implication is not an equivalence (i.e., critical points of algebraic maps can have fibers of the expected dimension).
\end{rmk}

\subsection{Relation to SLT}\label{sec:relslt}
Here, we draw a parallel between critical points and SLT. We highlight that the traditional formalism of SLT does not fully coincide with ours. Still, several intersections occur, some of which can be made completely formal, while others are less rigorous. 

As mentioned in Section \ref{sec:relwork}, SLT focuses on parameters where the Fisher Information Metric degenerates. To formalize this, assume that the ambient space $\mathcal{V}$ is equipped with a Riemannian metric, i.e., a smoothly-varying symmetric positive-definite matrix $G_f$ for $f \in \mathcal{V}$. This metric can be pulled back to the parameter space as $(\varphi^*G)_\mathbf{W} = (\textnormal{J}_\mathbf{W}\varphi)^\top G_{\varphi(\mathbf{W})} (\textnormal{J}_\mathbf{W}\varphi)$. However, the pulled-back matrix $(\varphi^*G)_\mathbf{W}$ might be degenerate. In this case, $\mathbf{W}$ is deemed \emph{singular}\footnote{We stick to the concise definition from, e.g., \cite{watanabe2007almost}. Other sources loosen the notion of singularity via identifiably conditions.}. Since $\textnormal{rk}\;(\varphi^*G)_\mathbf{W} = \textnormal{rk}\; \textnormal{J}_\mathbf{W}\varphi$, this happens precisely when the rank of the Jacobian is smaller than the number of parameters $\dim \mathcal{W}$. As discussed above, for almost all $\mathbf{W}$, we have $\textnormal{rk}\; \textnormal{J}_\mathbf{W}\varphi = \textnormal{dim}\; \mathcal{M}_{\mathbf{d}, r}$, while at critical points the Jacobian's rank drops. This means that, while all parameters are singular, critical points are the `most singular ones'.      

According to SLT, singularities play a fundamental role in  machine learning, representing an implicit bias of the learning process. More specifically, a fundamental quantity in SLT is the \emph{Local Learning Coefficient} (LLC) \cite{watanabe2009algebraic, lau2023local} -- also referred to as (local) real log canonical threshold. In the probabilistic formalism, the LLC controls the objective optimized by the model (i.e., the negative log-likelihood of data) -- this is the well-known Bayesian Information Criterion \cite{watanabe2013widely}. As a consequence, the model is biased towards parameters with low LLC. Now, the LLC is closely related to the rank of the Jacobian of the parametrization. In fact, assuming that the quadratic approximation around $\mathbf{W}$ of the Kullback-Leibler divergence between the model and the data distribution is exact, the LLC coincides with $\frac{1}{2} \textnormal{rk}\; \textnormal{J}_\mathbf{W}\varphi $ \cite[Section 4]{wei2022deep}. In conclusion, all this can be interpreted as establishing an implicit bias towards critical points, measured by the rank of their Jacobian.

\section{Main Result}
\label{sec:mainres}
In this section, we state and prove our main result, concerning the equivalence between critical points and subnetworks. Denote by $ D= \max_{1 \leq t < L}d_t$ the maximum width of the network. 
\begin{thm}
\label{thm:maincrit}
Suppose that $r > 9(D(D-1))^{L-1}$. Then, $\mathbf{W} \in \mathcal{W}$ is a subnetwork if and only if it is critical for the parametrization map $\varphi$.  
\end{thm}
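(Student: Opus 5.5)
The two implications are of very different difficulty, and I treat them separately.

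\emph{Subnetworks are critical.} If $\mathbf{W}$ is a subnetwork, Section~\ref{sec:backident} shows that $\dim \varphi^{-1}(f_\mathbf{W})$ exceeds the generic fiber dimension by at least $\min\{d_{j-1},d_{j+1}\}-1$. When this quantity is positive, Remark~\ref{rmk:largeFiberImpliesCritical} gives criticality at once, since the tangent space of the fiber lies in $\ker \textnormal{J}_\mathbf{W}\varphi$. In the degenerate case where some adjacent width equals $1$, I would instead exhibit an explicit extra kernel vector. For an inactive neuron with $W_{j+1}[:,i]=0$, perturbing the row $W_j[i,:]$ changes $f_\mathbf{W}$ only at second order, and the row-rescaling symmetry of the neuron no longer contributes independently. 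For a redundant neuron, the combination of perturbations that moves $W_j[i,:]$ along $W_j[k,:]$ while compensating in column $k$ is a kernel direction that is not generic. Counting these against the generic kernel dimension $\dim\mathcal{W}-\dim\mathcal{M}_{\mathbf{d},r}=\sum_{j<L} d_j$ shows the rank drops.

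\emph{Non-subnetworks are regular.} This is the core of the argument. Assume $\mathbf{W}$ has no inactive or redundant neuron. The generic kernel is spanned by the rescaling symmetries: scale row $i$ of $W_j$ by $t$ and column $i$ of $W_{j+1}$ by $t^{-r}$. I must show that $\ker \textnormal{J}_\mathbf{W}\varphi$ is exactly this $\sum_{j<L}d_j$-dimensional space. A kernel vector $\mathbf{V}=(V_1,\dots,V_L)$ satisfies
\[
\sum_{j=1}^{L} W_L\,\mathrm{diag}(\cdots)\cdots V_j\cdots = 0,
\]
where, writing $h_j$ for the pre-activations, the derivative of $f$ is assembled via the chain rule from the terms $r\,h_{j}^{r-1}$ times the perturbations $V_j\sigma(h_{j-1})$.

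I would proceed by induction on $L$, working on the last layer first. The output coordinates are $\sum_i W_L[\cdot,i]\,h_{L-1,i}^r$, so the kernel condition reads
\[
\sum_i \Bigl(V_L[\cdot,i]\,h_{L-1,i}^r + r\,W_L[\cdot,i]\,h_{L-1,i}^{r-1}\,\dot h_{L-1,i}\Bigr)=0,
\]
where $\dot h_{L-1,i}$ is a polynomial of degree $r^{L-2}$ determined by the earlier $V_j$. The key step is a Mason-type lemma. Suppose the $h_i$ are pairwise non-proportional homogeneous polynomials of degree $m$ and $\sum_i h_i^{r-1}(a_i h_i + b_i)=0$, with $r$ large compared to $m$ and the number of terms, in the form $r>9(D(D-1))^{L-1}$. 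Then every summand vanishes, i.e.\ $h_i \mid b_i$ with constant quotient. To prove it, I restrict to a generic line so the polynomials become binary forms, reducing to one variable. I then apply Mason's theorem (or its multi-term generalization, which is where factors like $D(D-1)$ and the constant $9$ arise) to a minimal vanishing subsum, using the radical bound: high powers $h_i^{r-1}$ contribute little to the radical but a lot to the degree. Before applying Mason, I would perform a divisibility analysis to handle common factors, which are possible since distinct $h_i$ may share factors.

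With the lemma in hand, and using that $W_L[\cdot,i]\neq 0$ because the neuron is active, I obtain $\dot h_{L-1,i}=c_i h_{L-1,i}$ and $V_L[\cdot,i]=-rc_iW_L[\cdot,i]$. This is exactly a rescaling in the last layer. Subtracting that rescaling symmetry reduces to a kernel vector of the depth-$(L-1)$ network for each output neuron $h_{L-1,i}$, and induction finishes the argument. The base case is a single layer, where the map $\mathbf{W}\mapsto W_1x$ is linear and injective.

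\textbf{Main obstacle.} The hard step is the Mason lemma with degree bounds uniform in depth. The polynomials $h_{L-1,i}$ and $\dot h$ have degree $r^{L-2}$, which itself grows with $r$, so the naive Mason inequality does not directly give a contradiction. I would instead exploit the structure $h_{L-1,i}=\sum_k W[i,k]\,h_{L-2,k}^r$, i.e.\ sums of $r$-th powers, and iterate the radical bound through the layers. Each layer costs a factor $D(D-1)$ in the number of possible coincidences, which explains the exponent $L-1$ in the hypothesis.
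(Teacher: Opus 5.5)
Your plan is essentially the paper's proof: subnetworks are critical via the enlarged fiber and Remark~\ref{rmk:largeFiberImpliesCritical}. For non-subnetworks you induct on depth using the kernel equation $M_Lp^r + rW_L(p^{r-1}\odot h)=0$, a Mason-type independence statement for $\{p_i^r\}\cup\{p_i^{r-1}h_i\}$ (Lemma~\ref{lem:linear-dependence-multi}, which is your ``every summand vanishes'' in another form), subtraction of a rescaling direction, and comparison with the generic fiber dimension $\sum_{j<L}d_j$ from Theorem~\ref{thm:finkel}. The step you flag as the main obstacle is Lemma~\ref{lemm:shareddeg}; the real difficulty there is a large common factor of the $p_i$, not the size of $\deg p_i=r^{L-2}$ (were the terms coprime, Mason alone would give $r\le 8D(D-1)$), and the paper handles it by an inductive lower bound on the degrees of the gcd-free parts of two non-proportional outputs, obtained from $\sum_k (W_L[j,k]g_i-W_L[i,k]g_j)\,p_k^r=0$; note also that invoking your lemma requires the penultimate outputs to be nonzero and pairwise non-proportional, which the paper derives from identifiability in Corollary~\ref{cor:penultimate}.
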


One direction of this assertion follows immediately from the basic observations already discussed so far: 
If $\mathbf{W}$ is a subnetwork, then the fiber $\varphi^{-1}(f_{\mathbf{W}})$ is larger than the generic one (see end of Section \ref{sec:backident}) and so $\mathbf{W}$ is critical by Remark \ref{rmk:largeFiberImpliesCritical}. Hence, in what follows, it is sufficient to consider non-subnetworks and prove that they are regular points of $\varphi$.

\begin{rmk}\label{rmk:noWidth1}
From now on, we may assume that $d_i > 1$ for $1 \leq i < L$, i.e., that the hidden layers of the network contain at least two neurons. Indeed, when $d_i = 1$ for some $i$, the second condition of Definition \ref{def:subnet} implies that all parameters are subnetworks, unless $d_j = 1$ for all $1 \leq j < L$. In the latter case, it is straightforward to check that the results from this section hold without any assumption on $r$.      
\end{rmk}

Throughout this section, we will work with complex scalars, i.e., we will assume that the parameters, inputs, and weights take values in $\mathbb{C}$. This is not only convenient from an algebraic perspective, but actually leads to stronger results. Indeed, if Theorem \ref{thm:maincrit} holds over $\mathbb{C}$, then it holds over $\mathbb{R}$ as well, since all the involved notions (subnetworks, Jacobian, regularity, etc.) are compatible.

Before proving Theorem \ref{thm:maincrit}, we establish a number of tools and preliminary results. Recall that Mason's Theorem for triplets of polynomials is a function-field analog of the number-theoretical $abc$-conjecture. We will use a generalized form of Mason's theorem from \cite{de2007another}, concerning tuples of multivariate polynomials.
\begin{thm}[{\cite[Theorem 1.3]{de2007another}}]
\label{thm:genmason}
Let $U_1, \dots, U_n$, $n\geq 2$, be multivariate complex polynomials that are not all constant. Suppose that $\gcd(U_1, \ldots, U_n) = 1$, and that $\sum_{i=1}^{n} U_i = 0$ is a minimal vanishing sum, i.e., no proper subsum is zero. 
Then: 
\begin{equation}
\max_i \deg(U_i) \leq (n-2)\left( \sum_{i=1}^n r_i -1 \right),
\end{equation}
where $r_i$ is the degree of the radical of $U_i$, i.e., the sum of the degrees of its prime factors. 
\end{thm}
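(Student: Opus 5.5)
The plan is to reduce to a single variable and then run the classical Wronskian argument behind the Mason--Stothers theorem. The only step I have not worked out is the case where the $U_i$ are linearly dependent beyond the single relation $\sum_i U_i=0$ (third paragraph).

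\emph{Reduction to one variable.} Restrict all $U_i$ to a generic affine line $x=a+tb$, $t\in\mathbb{C}$. This should preserve each of the hypotheses and can only weaken the conclusion we must prove:
\begin{itemize}
\item \emph{Degrees.} For generic $b$, the top-degree form of $U_i$ does not vanish at $b$, so $\deg_t U_i(a+tb)=\deg U_i$.
\item \emph{Radicals.} The radical of $U_i(a+tb)$ divides $\operatorname{rad}(U_i)(a+tb)$, which has $t$-degree at most $r_i$. So the univariate radical degrees are at most the $r_i$, which only strengthens the conclusion.
\item \emph{Coprimality.} Since $\gcd(U_1,\dots,U_n)=1$, the common zero locus of the $U_i$ has codimension at least $2$ (affinely, and also at infinity after homogenizing). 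A generic line misses it, so the restrictions remain coprime.
\item \emph{Minimality.} If a proper subsum vanished on a generic line, it would vanish identically. So the restricted sum is still a minimal vanishing sum.
\item \emph{Non-constancy.} The restrictions are not all constant, because degrees are preserved.
\end{itemize}
Hence it suffices to treat univariate $U_i\in\mathbb{C}[t]$. The case $n=2$ is immediate: $U_1=-U_2$ together with $\gcd=1$ forces both to be constant, contradicting the hypothesis. So assume $n\ge 3$.

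\emph{Main univariate argument (independent case).} First suppose $U_1,\dots,U_{n-1}$ are linearly independent, and let $W=\det\big(U_i^{(j)}\big)_{1\le i\le n-1,\,0\le j\le n-2}$ be their Wronskian, which is nonzero.
\begin{itemize}
\item \emph{Symmetry.} Since $\sum_i U_i=0$, replacing any one column $U_k$ by $U_n$ changes $W$ only by a sign. So $W$ equals, up to sign, the Wronskian of any $n-1$ of the $n$ polynomials.
\item \emph{Upper bound on $\deg W$.} Factor $U_i$ out of column $i$. Each entry becomes a logarithmic-derivative expression $U_i^{(j)}/U_i$, which has degree at most $-j$ at infinity. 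This gives
\[
\deg W\le \sum_{i\ne k}\deg U_i-\binom{n-1}{2}\quad\text{for every }k.
\]
\item \emph{Lower bound from multiplicities.} Fix a root $\alpha$. Because $\gcd=1$, some $U_k$ satisfies $U_k(\alpha)\neq0$; compute $W$ as the Wronskian of the other $n-1$ polynomials. A column whose polynomial vanishes to order $m_i$ at $\alpha$ has every entry vanishing to order at least $m_i-(n-2)$. By multilinearity, $W$ vanishes at $\alpha$ to order at least $\sum_i\max(m_i-(n-2),0)$. Summing over roots shows that $\prod_i U_i/\prod_i\operatorname{rad}(U_i)^{n-2}$ divides $W$, so
\[
\sum_i\deg U_i-(n-2)\sum_i r_i\le \deg W .
\]
\end{itemize}
Combining the two bounds with $k=j$ gives
\[
\deg U_j\le (n-2)\sum_i r_i-\binom{n-1}{2}\le (n-2)\Big(\sum_i r_i-1\Big),
\]
where the last inequality uses $\binom{n-1}{2}\ge n-2$ for $n\ge3$.

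\emph{Main obstacle: the dependent case.} The genuine difficulty is when $U_1,\dots,U_{n-1}$ span a space of dimension $k<n-1$. Minimality of the vanishing sum then does not give linear independence, and the naive reduction to a basis loses control of the radicals of the discarded $U_i$. My plan is the following.
\begin{itemize}
\item Choose a basis $f_1,\dots,f_k$ among the $U_i$ and use its Wronskian, whose rows involve derivatives only up to order $k-1\le n-2$. This can only improve the per-root multiplicity loss.
\item Run the same divisibility count at each $\alpha$. To guarantee a basis element nonvanishing at $\alpha$, choose the basis adaptively at each $\alpha$ using $\gcd=1$. Since all these Wronskians agree up to nonzero scalars, they share one degree bound.
\item Use minimality of the sum to ensure that every $U_i$ genuinely enters some basis choice. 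Otherwise a proper subsum would vanish.
\end{itemize}
If this adaptive-basis bookkeeping fails, I would fall back on induction on $n$. The idea is to split the dependency into two shorter minimal vanishing sums sharing a term, then combine the inductive bounds, tracking the factor $(n-2)$ carefully.
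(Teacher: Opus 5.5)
\textbf{There is a genuine gap: the linearly dependent case, which you leave open, is a real part of the statement.}

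Your reduction to a generic line is correct, and so is your Wronskian argument when $U_1,\dots,U_{n-1}$ are linearly independent; it even gives the sharper bound $(n-2)\sum_i r_i-\binom{n-1}{2}$. One small repair: the phrase ``$\prod_i U_i/\prod_i\operatorname{rad}(U_i)^{n-2}$ divides $W$'' should be replaced by the order count $\operatorname{ord}_\alpha W\ge\sum_i\max(m_i(\alpha)-(n-2),0)$, since that quotient need not be a polynomial. The degree inequality you draw from it is still right.

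For $n\le 3$ your argument is complete, because minimality and $\gcd=1$ force $U_1,U_2$ to be independent. For $n\ge 4$, however, the case $k:=\dim\operatorname{span}(U_1,\dots,U_n)<n-1$ matters for three reasons:
\begin{itemize}
\item Minimality does not exclude it, e.g.\ $f+f+g-(2f+g)=0$.
\item It is what separates this statement, which assumes only a minimal vanishing sum, from the classical independent version your argument proves. The paper gives no proof; it imports the result from de Bondt.
\item The paper actually needs it. The minimal subsums taken in the proofs of Lemmas \ref{lemm:shareddeg} and \ref{lem:linear-dependence-multi} are not known to have $n-1$ independent terms.
\end{itemize}

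\textbf{Why the adaptive-basis sketch does not close the gap.} With $k<n-1$, every basis Wronskian has $k$ columns. The $U_i$ vanishing at a root $\alpha$ all lie in the hyperplane $\{v:v(\alpha)=0\}$ of the span, so any basis contains at most $k-1$ of them, however you choose it. The vanishing of the others is invisible to $W$. Your lower bound therefore no longer contains $\sum_{i=1}^n\deg U_i$. To isolate one $\deg U_j$ you would need a new estimate: that $\sum_\alpha\max_{B_\alpha}\sum_{i\in B_\alpha}m_i(\alpha)$ exceeds $\sum_{i\in B}\deg U_i$ by roughly $\deg U_j$. This is exactly where common factors of proper subfamilies enter, and only the gcd of the whole family is assumed trivial.

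\textbf{Such common factors can be large.} Take pairwise coprime nonconstant $G_1,G_2,G_3$ and nonzero constants $x,y,z$, and set $e_1=G_1G_2x$, $e_2=G_1G_2(G_3-x)$, $e_3=G_1G_3y$, $e_4=G_1G_3(G_2-y)$, $e_5=G_2G_3z$, $e_6=G_2G_3(G_1-z)$.
\begin{itemize}
\item $e_1+e_2+e_3+e_4-2e_5-2e_6=0$ is a minimal vanishing sum with $\gcd=1$, and the terms span a $4$-dimensional space.
\item Its only circuits are $\{e_1,e_2,e_3,e_4\}$, $\{e_1,e_2,e_5,e_6\}$ and $\{e_3,e_4,e_5,e_6\}$, with common factors $G_1$, $G_2$, $G_3$ respectively.
\item Each top-degree term $e_2,e_4,e_6$ lies only in two of these circuits. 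Restricting to any circuit through it and dividing out the gcd therefore loses a factor $G_i$ of arbitrary degree.
\item At each root of $G_1$ four terms vanish, but a basis contains at most three of them.
\end{itemize}
Minimality holds here and every term enters some basis. So your third bullet, which is qualitative, gives nothing quantitative.

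\textbf{The fallback induction meets the same obstacle.} The shorter minimal sums, here $e_1+e_2-e_3-e_4$ and $e_3+e_4-e_5-e_6$, are not coprime. The inductive hypothesis then only bounds $\max\deg-\deg\gcd$, as in \eqref{eq:ineqrearr}. You would still need a way to recover the lost common factors from other pieces. One option is subadditivity of $\max_{i\in S}\deg U_i-\deg\gcd(U_i:i\in S)$ over overlapping index sets $S$. You would also need a choice of pieces whose multipliers $|S|-2$ add up to at most $n-2$ on every $r_i$.

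That combination step is the missing idea. As written, your proposal proves only the linearly independent case.
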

The above theorem also applies without the coprimality assumption on the $U_i$. In that case, the inequality becomes 
\begin{equation}
\label{eq:ineqrearr}
    \max_i \deg(U_i) - \deg(\gcd(U_1, \ldots, U_n)) \leq (n-2) \left( \sum_{i=1}^n r_i -1 \right), 
\end{equation}
where $r_i$ is the degree of the radical of $U_i / \gcd(U_1, \ldots, U_n)$. The latter is bounded by the degree of the radical of $U_i$. 

We now exploit Mason's Theorem to prove our main result. As anticipated in Section \ref{sec:intro}, Mason's Theorem is stronger than Newman-Slater's Theorem, which lies at the heart of the identifiability argument by \cite{finkel2024activation} -- see Section \ref{sec:app_ident}. In fact, Mason's theorem was recently used in \cite{cruaciun2025linear} to give an alternative proof of identifiability. 


\begin{lemm}
\label{lemm:shareddeg}
Suppose that $\mathbf{W} \in \mathcal{W}$ is not a subnetwork, and that $r > 4D(D -1)$. For distinct $i,j =1, \ldots, d_{L} $, denote by $g_i$ and $g_j$ the quotients of $f_{\mathbf{W}}[i]$ and $f_{\mathbf{W}}[j]$ by their gcd, respectively. If $f_{\mathbf{W}}[i]$ and $f_{\mathbf{W}}[j]$ are not proportional\footnote{This can happen when some rows of $W_L$ are proportional, which is allowed by the definition of subnetwork.}, then:
\begin{equation}  
\label{eq:ineqfinal}
\max \{ \deg(g_i), \deg(g_j)\} \geq \left( \frac{r- D(D-1)}{ D(D - 1)} \right)^{L-1}. 
\end{equation}
\end{lemm}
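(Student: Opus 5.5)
Write $h^{(k)} = \sigma\circ W_{k}\circ\cdots\circ\sigma\circ W_1$ for the post-activation outputs of layer $k$, so that
\[
f_{\mathbf{W}}[i] = \sum_{t=1}^{d_{L-1}} W_L[i,t]\,\bigl(p_t\bigr)^r, \qquad p_t := (W_{L-1}h^{(L-2)})[t].
\]
The plan is an induction on the depth $L$. The inductive statement is the following lower bound: for any two non-proportional linear combinations $a^\top h^{(L-1)}$ and $b^\top h^{(L-1)}$ of last-hidden-layer outputs of a non-subnetwork, their quotients by their gcd have degree at least $\left(\frac{r-D(D-1)}{D(D-1)}\right)^{L-1}$. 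This is slightly stronger than the lemma, and it is what lets the induction pass between layers. The base case $L=1$ is trivial, because the quantity is then $1$: the combinations are non-proportional linear forms, so the quotients have degree $\geq 1$ unless one of them vanishes. That degenerate case has to be handled separately, but non-proportionality excludes one being a multiple of the other, and a vanishing combination is proportional to everything, so it is excluded as well.

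\textbf{Inductive step.} Write
\[
F = f_{\mathbf{W}}[i] = \sum_t \alpha_t p_t^r, \qquad G = f_{\mathbf{W}}[j] = \sum_t \beta_t p_t^r.
\]
The key relation is the vanishing sum
\[
\beta_s F - \alpha_s G - \sum_{t\neq s}(\beta_s\alpha_t - \alpha_s\beta_t)\,p_t^r = 0,
\]
obtained by eliminating one term, or more generally by choosing a minimal vanishing subsum among $\{F, G, p_1^r, \dots, p_{d_{L-1}}^r\}$ that involves $F$ or $G$. Such a subsum must involve both $F$ and $G$: otherwise we would get a vanishing sum of the $p_t^r$ alone (with a possible $F$ term), and then Theorem \ref{thm:genmason} applied to it shows that the degree $r\deg p_t$ is at most $(n-2)(\text{sum of radical degrees})\leq (D-1)\cdot D\deg p_t$. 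This is impossible since $r>D(D-1)$, given that the $p_t$ are pairwise non-proportional by the induction hypothesis (no redundancy) and nonzero (no inactivity).

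Applying Theorem \ref{thm:genmason} in the form \eqref{eq:ineqrearr} to the minimal sum, which has $n\leq D+1$ terms, the radicals of the $p_t^r$ terms contribute at most $\deg p_t = m$ each. Those of $F$ and $G$ contribute at most $\deg g_i+\deg g_j+\deg\gcd$-radical, and I would bound the gcd part by noting that the gcd of the whole sum divides $\gcd(F,G)$. This gives
\[
rm - \deg(\text{gcd of the sum}) \leq (D-1)\bigl(Dm + \deg g_i+\deg g_j\bigr).
\]
Here the gcd of the sum divides $\gcd$ of the $p_t^r$ appearing, which in turn is an $r$-th power of the gcd of two non-proportional $p_t$'s. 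By the induction hypothesis applied to $p_s, p_t$ (combinations of layer $L-2$ outputs), that gcd has degree at most $m - m_{L-2}$, where $m_{L-2}\geq \left(\frac{r-D(D-1)}{D(D-1)}\right)^{L-2}$.

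\textbf{Main obstacle.} The hard part is exactly this gcd bookkeeping: showing that the common factor $\gcd(p_s,p_t)^r$ which cancels in the sum does not eat the gain. I would try rescaling, i.e.\ dividing everything by $\gcd(p_s,p_t)^r$ to get an honest coprime Mason inequality in the reduced polynomials $q_t = p_t/\gcd$. Their degrees are at least the inductive bound $\delta$, giving $r\delta \leq D(D-1)\delta + (D-1)\cdot 2\max(\deg g_i,\deg g_j)$, hence
\[
\max(\deg g_i,\deg g_j) \gtrsim \frac{(r - D(D-1))\,\delta}{D(D-1)}.
\]
Iterating this recovers the stated bound $\left(\frac{r-D(D-1)}{D(D-1)}\right)^{L-1}$, with $r>4D(D-1)$ ensuring positivity and absorbing the constant factor $2$. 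The delicate case is when the minimal subsum uses few $p_t$'s, or when the $p_t$ share different gcds pairwise. There I would pick the pair $(s,t)$ realizing the subsum, and apply the inductive hypothesis only to that pair.
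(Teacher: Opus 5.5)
There is a genuine gap at the Mason step. You apply Theorem~\ref{thm:genmason} to a relation with \emph{constant} coefficients in which $F=f_{\mathbf{W}}[i]$ and $G=f_{\mathbf{W}}[j]$ themselves are summands. This forces you to bound the radical of $F/\gcd_{\mathrm{all}}$, where $\gcd_{\mathrm{all}}$ is the gcd of all summands. Since $F=\gcd(F,G)\,g_i$, this radical contains that of $\gcd(F,G)/\gcd_{\mathrm{all}}$, and nothing in your argument controls it. The factor $\gcd(F,G)$ divides the combination $\sum_{t\neq s}\gamma_t p_t^r$, but it need not divide the individual $p_t^r$, so dividing out the gcd of the summands does not remove it. Your remark that the gcd of the sum divides $\gcd(F,G)$ goes in the wrong direction: to discard this factor you would need $\gcd(F,G)$ to divide the gcd of the sum. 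Moreover, $\deg\gcd(F,G)=r^{L-1}-\deg g_i$ is large precisely in the situation you are trying to exclude. So its radical can a priori have degree comparable to $rm=r^{L-1}$, and once it is added to the right-hand side (with the factor $n-2\geq 1$) the inequality becomes vacuous. Hence your displayed bound $rm-\deg(\gcd)\le (D-1)(Dm+\deg g_i+\deg g_j)$ is unjustified. The rescaled version in your last paragraph has the same omission, since dividing by $\gcd(p_s,p_t)^r$ does not touch the part of $\gcd(F,G)$ coprime to the $p_t$. Separately, your claim that a minimal vanishing subsum containing $F$ must contain $G$ is false. The relation $F-\sum_t\alpha_t p_t^r=0$, restricted to $\alpha_t\neq 0$, is itself minimal by linear independence of the $p_t^r$, and your Mason argument against it again ignores the radical of $F$.

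The missing idea is to eliminate $F$, $G$, and their common factor altogether by using the cofactors as polynomial coefficients. From $g_iG-g_jF=0$ you get $\sum_k C_k p_k^r=0$ with $C_k=W_L[j,k]g_i-W_L[i,k]g_j$. Each $C_k$ has degree at most $\max\{\deg g_i,\deg g_j\}$, and not all $C_k$ vanish because $F$ and $G$ are non-proportional. Take a minimal vanishing subsum over $S$ with $|S|\geq 2$, set $\hat p_k=p_k/\gcd(p_s : s\in S)$, and apply \eqref{eq:ineqrearr} to $U_k=C_k\hat p_k^r$. The radicals are then at most $\max\{\deg g_i,\deg g_j\}+\deg\hat p_k$. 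Since the $\hat p_k$ have no common factor, $\gcd(U_k : k\in S)$ divides $\prod_{k\in S}C_k$. This yields $(r-|S|(|S|-2))\max_{k\in S}\deg\hat p_k\le |S|(|S|-1)\max\{\deg g_i,\deg g_j\}$. The inductive hypothesis, applied to two of the $p_k$, bounds $\max_{k\in S}\deg\hat p_k$ from below. The unknown degree now enters only linearly through the coefficients $C_k$, rather than through an uncontrolled radical.
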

\begin{proof}
We proceed by induction on the depth $L$. For $L=1$ (i.e., for a single linear layer), since 
$f_{\mathbf{W}}[i]$ and $f_{\mathbf{W}}[j]$ are non-proportional linear forms, they are coprime. Thus, $\deg(g_i) = \deg(g_j) = 1$, as desired.

Suppose that $L > 1$. By definition, $f_{\mathbf{W}}[i] = \sum_k W_L[i,k] \ p_k^r$, where $p_k$ is the output of the $k$-th neuron of the penultimate layer. We have:
\begin{equation}   
\begin{aligned}
\sum_{k=1}^{d_{L-1}} \underbrace{(W_L[j,k] g_i - W_L[i,k] g_j)}_{C_k} \ p_k^r 
&= g_i f_{\mathbf{W}}[j] - g_j f_{\mathbf{W}}[i] \\
&= g_i \gcd(f_{\mathbf{W}}[i],f_{\mathbf{W}}[j]) g_j - g_j \gcd(f_{\mathbf{W}}[i],f_{\mathbf{W}}[j]) g_i \\ &=0 .
\end{aligned}
\end{equation}
Now, consider a minimal vanishing subsum of this sum, given by an index set $S \subseteq \{1, \ldots, d_{L-1} \}$, with $|S| > 1$. This is possible, since not all the terms in the sum are vanishing. Indeed, if all $C_kp_k^r$ vanished, then, since $p_k \neq 0$ for all $k$ by Corollary \ref{cor:penultimate}, we would have $C_k=0$ for every $k$. Thus, $W_L[j,k]g_i = W_L[i,k]g_j$ for every $k$. Since $g_i$ and $g_j$ are obtained from the non-proportional polynomials $f_{\mathbf{W}}[i]$ and $f_{\mathbf{W}}[j]$ by dividing by their common gcd, they are non-proportional as well. Hence $W_L[j,k]=W_L[i,k]=0$ for every $k$, and thus
$f_{\mathbf{W}}[i]$ and $f_{\mathbf{W}}[j]$ would be proportional, contrary to the assumption. 

Next, for $k \in S$, we define $\hat{p}_k$ as the quotient of $p_k$ by $\gcd(p_s \colon s \in S)$. Since $|S| > 1$ and the $p_k$ are pairwise non-proportional by Corollary \ref{cor:penultimate}, the inductive hypothesis implies that 
\begin{equation}
\label{eq:indineq}
\max_{k \in S} \deg(\hat{p}_k) \geq  \left( \frac{r-D'(D'-1)}{ D'(D' - 1)} \right)^{L-2}, 
\end{equation}
where $D' := \max_{1 \leq t < L-1} d_t$.

Setting $U_k := C_k \hat{p}_k^r$,  we wish to apply Theorem \ref{thm:genmason} to the minimal vanishing sum $\sum_{k \in S} U_k = 0$. Since the $\hat{p}_k$ are coprime for $k\in S$, $\gcd(U_k \colon k \in S)$ divides $\prod_{k \in S}C_k$. In particular:
\begin{equation}
\deg(\gcd(U_k \colon k \in S)) \leq \sum_{k\in S} \deg(C_k) \leq |S|G,
\end{equation}
where $G = \max \{ \deg(g_i), \deg(g_j)\}$. Thus, the left-hand side of \eqref{eq:ineqrearr} is lower-bounded by $r \max_k \deg(\hat{p}_k) - |S| G$. 
Moreover, note that the radical of $U_k$ divides $C_k \hat{p}_k$, and its degree is at most $G +  \deg(\hat{p}_k)$. Thus, the right-hand side of \eqref{eq:ineqrearr} is upper-bounded by $|S|(|S| - 2)(G + \max_{k\in S} \deg(\hat{p}_k))$. All in all, we obtain an inequality:
\begin{equation}
  r \max_{k \in S} \deg(\hat{p}_k) - |S|G \leq  |S|(|S| - 2)(G + \max_{k\in S} \deg(\hat{p}_k)).  
\end{equation}
By rearranging, since $|S| \leq d_{L-1}$, we get: 
\begin{equation}
\begin{aligned}
   G &\geq \frac{(r - |S|(|S|- 2)) \max_{k\in S}\deg(\hat{p}_k) }{|S|(|S| - 1)} \\
   &\geq \frac{r - d_{L-1}(d_{L-1} - 1)}{d_{L-1}(d_{L-1} - 1)} \max_{k\in S}\deg(\hat{p}_k) \\
   & \geq \frac{r - d_{L-1}(d_{L-1} - 1)}{d_{L-1}(d_{L-1} - 1)} \left( \frac{r-D'(D' - 1)}{ D'(D' - 1)} \right)^{L-2} \\
   & \geq \left( \frac{r-D(D - 1)}{ D(D - 1)} \right)^{L-1}.  
\end{aligned}
\end{equation}
\end{proof}


\begin{lemm}\label{lem:linear-dependence-multi}
Let $L > 1$.
 Suppose that $\mathbf{W}$ is not a subnetwork. Denote by $p_1, \ldots, p_{d_{L-1}}$ the output polynomials of the penultimate layer of the network, and let $h_1, \ldots, h_{d_{L-1}}$ be nonzero complex polynomials (in $d_0$ variables) of degree at most $r^{L-2}$ such that $h_i$ and $p_i$ are non-proportional for every $i$.  
  If the set $T := \{p_i^r \}_{i} \cup \{p_i^{r-1}h_i \}_{i }$ is linearly dependent, then 
 \begin{equation}
   \label{eq:boundlowm}
   r \leq 9(D(D-1))^{L-1}. 
 \end{equation}

\end{lemm}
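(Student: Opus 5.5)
We argue by contradiction with Mason's Theorem, in the same spirit as Lemma \ref{lemm:shareddeg}. Suppose $T$ is linearly dependent. Then there are scalars $a_i, b_i$, not all zero, with
\begin{equation}
\sum_{i} p_i^{r-1}\,(a_i p_i + b_i h_i) = 0 .
\end{equation}
Set $q_i := a_i p_i + b_i h_i$. Each $q_i$ has degree at most $r^{L-2}$, and $q_i \neq 0$ whenever $(a_i,b_i)\neq 0$, because $h_i$ and $p_i$ are non-proportional. We discard the indices with $q_i = 0$ and pick a minimal vanishing subsum, indexed by $S$. A single term cannot vanish, since $p_i \neq 0$ by Corollary \ref{cor:penultimate}, so $|S| \geq 2$.

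\textbf{Case $|S|\geq 2$ with the $p_k$ not all of small degree.} Let $g := \gcd(p_k \colon k\in S)$ and $\hat p_k := p_k/g$. The $p_k$ are pairwise non-proportional by Corollary \ref{cor:penultimate}. Applying Lemma \ref{lemm:shareddeg} to the network truncated at layer $L-1$ (whose output is $(p_1,\dots,p_{d_{L-1}})$, and which is not a subnetwork) gives
\begin{equation}
M := \max_{k\in S}\deg \hat p_k \geq \left(\frac{r-D(D-1)}{D(D-1)}\right)^{L-2}.
\end{equation}
We need $r > 4D(D-1)$ there; otherwise the desired bound holds trivially.

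Divide the relation by $g^{r-1}$ and set $U_k := \hat p_k^{\,r-1} q_k$. This is a minimal vanishing sum of $|S|$ terms. The gcd of the $U_k$ divides $\prod_k q_k$, since the $\hat p_k$ are jointly coprime. More carefully, its degree is at most roughly $\sum_k \deg q_k \leq |S|\, r^{L-2}$.

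That bound is the main obstacle: $r^{L-2}$ may be much larger than $M$, which would ruin the inequality. To handle it, we would instead bound the gcd's degree by $\min_k \deg U_k$-type quantities, or observe that any common factor of all $U_k$ coprime to the $\hat p_k$ must divide every $q_k$ and can be divided out. Either way we work with $\deg q_k - \deg\gcd$ rather than $\deg q_k$ itself. The radical of $U_k$ has degree at most $\deg\hat p_k + \deg q_k$. Equation \eqref{eq:ineqrearr} then yields
\begin{equation}
(r-1)M - |S|Q \lesssim |S|(|S|-2)(M+Q),
\end{equation}
where $Q$ bounds the effective degrees of the $q_k$ after removing the common factor.

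The key remaining step is to bound $Q$ by a constant multiple of $M$. Using homogeneity, all $p_k$, $h_k$, $q_k$ have degree exactly $r^{L-2}$ (we may restrict to the homogeneous part). Hence $\deg q_k = \deg g + \deg \hat p_k$, and $\deg g$ cancels against the gcd contribution once $g$ is factored out of the $q_k$ where possible. What is left is comparable to $M$, giving $Q \le 2M$ or so.

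Rearranging then gives $r \le c\,|S|(|S|-1) \le c\,D(D-1)$ times a factor coming from the ratio $Q/M$. Combining with the inductive lower bound on $M$ via $(r-D(D-1))/(D(D-1))$ powers should give $r \le 9(D(D-1))^{L-1}$. The constant $9$ arises from absorbing $Q\le 2M$-type slack, i.e.\ $(1+2)^2$.

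I expect the delicate part to be precisely this control of the $q_k$ degrees relative to $\deg\hat p_k$, since $h_k$ is arbitrary of degree $r^{L-2}$. If the direct Mason bound fails, the fallback is to first apply Mason to the sum $\sum_k p_k^{r-1}q_k$ viewed with radicals of $p_k$ only. This uses the fact that the $p_k$ themselves have radicals of degree controlled inductively: they are sums of $r$-th powers, so Lemma \ref{lemm:shareddeg} applied one layer down bounds $\deg \hat p_k$ from below while radicals are bounded by degree.
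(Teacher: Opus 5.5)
There is a genuine gap. Your setup is sound and follows the paper's route. You write the relation as $\sum_k p_k^{r-1}q_k=0$, pass to a minimal vanishing subsum $S$, and divide by $g^{r-1}$. You obtain $M\geq\bigl(\frac{r-D(D-1)}{D(D-1)}\bigr)^{L-2}$ from Lemma~\ref{lemm:shareddeg}. You bound $\deg\gcd(U_k\colon k\in S)\le\sum_k\deg q_k\le |S|\,r^{L-2}$, and each radical by $\deg\hat p_k+\deg q_k\le 2r^{L-2}$. All of this is correct. Grouping per index is even slightly cleaner than the paper, which applies Mason to up to $2d_{L-1}$ ungrouped terms of $T$.

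The problem is what you do next. You treat these $r^{L-2}$ bounds as an obstacle and try to replace them by $Q\le 2M$, but that step has no basis. Since $q_k=a_kp_k+b_kh_k$ contains the arbitrary polynomial $h_k$, nothing forces $g\mid q_k$, and your ``$\deg g$ cancels'' argument presupposes exactly that. So $q_k$ may contribute its full degree $r^{L-2}$ to the radical, while Lemma~\ref{lemm:shareddeg} only guarantees $M\gtrsim (r/(D(D-1)))^{L-2}$. Notice also that if $Q\le 2M$ held, your inequality would already give $r=O(D^2)$ independently of $L$, without using the lower bound on $M$ at all. So the final ``combining'' step and the constant $9=(1+2)^2$ are not derived, and the closing fallback is not an argument.

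The missing observation is that the crude bound already suffices. The exponent $L-1$ in the target is there precisely to absorb the factor of order $(D(D-1))^{L-2}$ that may separate $M$ from $r^{L-2}$. Insert your bounds into \eqref{eq:ineqrearr} and use $\max_k\deg U_k\ge (r-1)M$. This gives
\begin{equation*}
(r-1)\Bigl(\frac{r-D(D-1)}{D(D-1)}\Bigr)^{L-2}\le (r-1)M\le |S|\,r^{L-2}+2|S|(|S|-2)\,r^{L-2}\le 2D(D-1)\,r^{L-2}.
\end{equation*}
Multiply by $(D(D-1)/r)^{L-2}$ and apply Bernoulli's inequality $(1-D(D-1)/r)^{L-2}\ge 1-(L-2)D(D-1)/r$. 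This yields $r-1-(L-2)D(D-1)\le 2(D(D-1))^{L-1}$. Since $1+(L-2)D(D-1)\le (D(D-1))^{L-1}$ for $D\ge 2$, you get $r\le 3(D(D-1))^{L-1}$ whenever $r>4D(D-1)$. Together with the trivial case $r\le 4D(D-1)$, this proves the lemma. This is exactly how the paper concludes, with the constant $8$ in place of your $2$ because it works with up to $2D$ ungrouped terms.
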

\begin{proof}
If $r \leq 4D(D-1)$, then \eqref{eq:boundlowm} holds trivially. Hence, from now on, we assume that $r > 4D(D-1)$ (which will allow us to apply Lemma \ref{lemm:shareddeg}).

Suppose that $T$ is linearly dependent. Since the coefficients of the linear combination can be absorbed by $\mathbf{W}$, we can assume, without loss of generality, that $\sum_{U \in T}U = 0$. Similarly to the proof of Lemma \ref{lemm:shareddeg}, we consider a minimal subsum of this sum and then apply Theorem \ref{thm:genmason}. Note that this subsum cannot contain only $p_i^r$ and $p_i^{r -1}h_i$ for some $i$, since otherwise $p_i$ and $h_i$ would be proportional. But then, since the $p_i$ are pairwise non-proportional by Corollary \ref{cor:penultimate}, we can apply Lemma \ref{lemm:shareddeg} to the network with the last layer removed. The left-hand side of \eqref{eq:ineqrearr} is lower-bounded by $r-1$ times the right-hand side of \eqref{eq:ineqfinal} (with $L$ replaced by $L-1$). 
Since $|T| = 2d_{L-1}$, the right-hand side of \eqref{eq:ineqrearr} is instead upper-bounded by $|T|(|T| - 2) \max_{i}( \deg(p_i ) +  \deg(h_i))  \leq  4D(2D - 2) r^{L-2}$. We conclude that 
\begin{equation}
\label{eq:ineqbern}
    (r-1)\left( \frac{r - D(D-1)}{ D(D - 1)} \right)^{L-2}  \leq 8D(D - 1) r^{L-2}. 
\end{equation}
Multiplying both sides by $(\frac{D(D-1)}{r})^{L-2}$ yields
\begin{equation}
    (r-1)\left( 1 - \frac{D(D-1)}{r
} \right)^{L-2}  \leq 8(D(D - 1))^{L-1}.
\end{equation}
Since $r > D(D-1)$, we can apply Bernoulli's inequality $(1 - x)^\alpha \geq 1- \alpha x$ for $x \leq 1$, obtaining 
\begin{equation} \label{eq:almostThere}
  r - 1 - (L-2)D(D-1) \leq  (r-1)\left(1 - \frac{(L-2)D(D-1)}{r}\right) \leq  8(D(D - 1))^{L-1}.
\end{equation}
By Remark \ref{rmk:noWidth1}, we assume $D > 1$, and so $1 + (L-2)D(D-1) \leq (D(D-1))^{L-1}$ holds. Thus, \eqref{eq:almostThere} implies  \eqref{eq:boundlowm}.
\end{proof}

\begin{prop}
\label{thm:main_multilayer}
Suppose that $\mathbf{W}$ is not a subnetwork, and that $r > 9(D(D-1))^{L-1}$. Then the kernel of the Jacobian $\textnormal{J}_\mathbf{W}\varphi $ is spanned by weights of the form $(0, \dots, 0, \Lambda W_{i-1}, -r W_i \Lambda, 0, \dots, 0) $, where $2 \leq i \leq L$ and $\Lambda$ is a diagonal matrix. 
\end{prop}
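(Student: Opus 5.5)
The plan is to argue by induction on the depth $L$, peeling off the last layer at each step. For the base case $L=1$ the map $\mathbf{W}\mapsto W_1x$ is linear and injective, so the kernel is trivial and the claim is vacuous. Note also that the hypothesis $r>9(D(D-1))^{L-1}$ implies the analogous bound for the truncated network, since its maximal width is at most $D$ and its depth is smaller. Moreover, if $\mathbf{W}$ is not a subnetwork, neither is the truncation $(W_1,\dots,W_{L-1})$: an inactive or redundant neuron of the truncation in a layer $<L-1$ is one of $\mathbf{W}$ as well.

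\textbf{Step 1: writing out the kernel condition.} Let $L>1$ and let $\mathbf{V}=(V_1,\dots,V_L)\in\ker \textnormal{J}_\mathbf{W}\varphi$. Write $p_k=\ell_k^r$, where $\ell_k=W_{L-1}[k,:]\,q$ and $q$ is the output of layer $L-2$ after activation. Let $\dot p_k$ and $\dot\ell_k$ denote the derivatives in direction $(V_1,\dots,V_{L-1})$. Differentiating $f_\mathbf{W}[i]=\sum_k W_L[i,k]p_k^r$ gives, for every output $i$,
\[
\sum_{k} V_L[i,k]\,p_k^r + r\,W_L[i,k]\,p_k^{r-1}\dot p_k = 0 .
\]
Since $\dot p_k$ is homogeneous of degree $r^{L-2}$, decompose $\dot p_k=\lambda_k p_k + h_k$, where either $h_k=0$ or $h_k$ is non-proportional to $p_k$. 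The relation becomes
\[
\sum_k \bigl(V_L[i,k]+r\lambda_k W_L[i,k]\bigr)p_k^r + \sum_k rW_L[i,k]\,p_k^{r-1}h_k=0 .
\]

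\textbf{Step 2: the key step, killing the $h_k$.} Here I would invoke Lemma \ref{lem:linear-dependence-multi}. That lemma requires every $h_k$ to be nonzero, so for indices with $h_k=0$ replace $h_k$ by an arbitrary polynomial of the allowed degree that is non-proportional to $p_k$ and give it coefficient $0$. A nontrivial relation among a subset of $T$ is a nontrivial relation in $T$. Since $r>9(D(D-1))^{L-1}$, the set $T$ is linearly independent, so all coefficients vanish:
\begin{itemize}
\item $W_L[i,k]h_k=0$ for all $i,k$;
\item $V_L[i,k]=-r\lambda_kW_L[i,k]$.
\end{itemize}
Because no neuron of layer $L-1$ is inactive, each column $W_L[:,k]$ is nonzero, hence $h_k=0$ for every $k$. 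Thus $\dot p_k=\lambda_kp_k$ and $V_L=-rW_L\Lambda$ with $\Lambda=\mathrm{diag}(\lambda_k)$.

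The delicate points are confined to this step:
\begin{itemize}
\item the padding of the zero $h_k$'s, which must respect the lemma's degree and non-proportionality hypotheses;
\item checking that the $p_k$ are nonzero and pairwise non-proportional, which is Corollary \ref{cor:penultimate}.
\end{itemize}

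\textbf{Step 3: subtract a rescaling direction and recurse.} Subtract from $\mathbf{V}$ the kernel element $(0,\dots,0,\tfrac1r\Lambda W_{L-1},-W_L\Lambda)$. This is indeed in the kernel: rescaling row $k$ of $W_{L-1}$ by $1+t\lambda_k/r$ and column $k$ of $W_L$ by $(1+t\lambda_k/r)^{-r}$ leaves $f_\mathbf{W}$ unchanged, and the diagonal matrix can be renamed to match the stated form. The remainder $\mathbf{V}'$ has $V'_L=0$ and $\dot p_k=0$. Since $\dot p_k=r\ell_k^{r-1}\dot\ell_k$ and $\ell_k\neq0$, we get $\dot\ell_k=0$ for all $k$. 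Hence $(V'_1,\dots,V'_{L-1})$ lies in the kernel of the Jacobian of the depth-$(L-1)$ network $W_{L-1}\circ\sigma\circ\cdots\circ\sigma\circ W_1$. This network is not a subnetwork and satisfies the degree bound, so by induction $(V'_1,\dots,V'_{L-1})$ is a sum of vectors of the required form with $2\le i\le L-1$. Adding back the layer-$L$ term finishes the proof. Conversely, each such vector is in the kernel by the same rescaling argument, so these vectors span the kernel exactly.
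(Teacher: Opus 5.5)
Your proof follows the paper's route. You induct on $L$, use Lemma~\ref{lem:linear-dependence-multi} (with Corollary~\ref{cor:penultimate}) to force $\dot p_k=\lambda_k p_k$ and $V_L=-rW_L\Lambda$, then subtract a rescaling direction and recurse on the truncated network. Splitting $\dot p_k=\lambda_kp_k+h_k$ and padding the vanishing $h_k$ before invoking the lemma is a mild streamlining. It merges the paper's Cases 1 and 2, makes explicit the reduction the paper glosses when it says the lemma forces each $h_i$ to be zero or proportional to $p_i$, and removes the need for the separate appeal to Theorem~\ref{thm:newslat}.

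There is one slip to fix, caused by inconsistent notation. In Steps 1--2 your $p_k$ is the pre-activation: it has degree $r^{L-2}$ and $f_\mathbf{W}[i]=\sum_kW_L[i,k]p_k^r$. It is not $\ell_k^r$ as you define it. This mix-up makes the element you subtract in Step 3 wrong by a factor of $r$. With $\dot p_k=\lambda_kp_k$ and $V_L=-rW_L\Lambda$, removing $(0,\dots,0,\tfrac1r\Lambda W_{L-1},-W_L\Lambda)$ leaves $V'_L=-(r-1)W_L\Lambda$ and $\dot p'_k=(1-\tfrac1r)\lambda_kp_k$, both nonzero whenever $\Lambda\neq0$. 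Subtract $(0,\dots,0,\Lambda W_{L-1},-rW_L\Lambda)$ instead, exactly as the paper does. Then $V'_L=0$ and $\dot p'_k=0$ directly. The detour through $\ell_k$ also becomes unnecessary, since $p_k$ is already the output of the truncated network $W_{L-1}\circ\sigma\circ\cdots\circ\sigma\circ W_1$.
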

\begin{proof}
We proceed by induction on the depth $L$. For $L=1$ (i.e., for a single linear layer), the Jacobian has trivial kernel, and the statement is trivial.  

Suppose that $L > 1$. The kernel of the Jacobian is given by the solutions in $\mathbf{M}$ of:
\begin{equation} \label{eq:induction_eq}
\textnormal{J}_\mathbf{W}\varphi (\mathbf{M}) = M_L p^r +  r W_L  (p^{r-1} \odot h) = 0,
\end{equation}
where $p=(p_1, \ldots, p_{d_{L-1}})$ are the outputs of the penultimate layer, $\odot$ is the Hadamard product, and $h = (h_1, \ldots, h_{d_{L-1}})$ are the outputs of the Jacobian of the MLP with the last layer removed, in the direction $(M_1, \ldots, M_{L-1})$. We distinguish two cases. 

\textbf{Case 1: $M_L = 0$.} For every $i = 1, \ldots, d_{L-1}$, since $\mathbf{W}$ is not a subnetwork, $W_L[j,i] \not = 0$ for some $j = 1, \ldots, d_L$. But then the $j$-th component of \eqref{eq:induction_eq} defines a linear combination, where $p_i^{r-1}h_i$ appears with a nonvanishing coefficient. Lemma \ref{lem:linear-dependence-multi} implies that each $h_i$ is either zero or proportional to $p_i$. Thus, we can write $h_i=\lambda_i p_i$ for some scalars $\lambda_i\in \mathbb{C}$. Substituting this into \eqref{eq:induction_eq}, and using $M_L=0$, gives $W_L\Lambda p^r=0$, where $\Lambda=\operatorname{diag}(\lambda_i)$. Since the polynomials
$p_i^r$ are linearly independent by Theorem \ref{thm:newslat}, and no column of $W_L$ vanishes because $\mathbf{W}$ is not a subnetwork, we obtain $\lambda_i=0$ for all $i$ and hence $h_i=0$ for all $i$. Thus, $(M_1, \ldots, M_{L-1})$ lies in the kernel of the Jacobian of the truncated network, and we conclude by the inductive hypothesis. 

\textbf{Case 2: $M_L \ne 0$.}
Similarly, for every $i,j$ such that $M_L[j,i]\not = 0$, Lemma \ref{lem:linear-dependence-multi} implies that $h_i = \lambda_i p_i$ for some scalar $\lambda_i \in \mathbb{C}$ (independent of $j$), and $M_L[j,i] p_i^r + r \lambda_i  W_L[j,i] p_i^r = 0 $.
Since $p_i \neq 0$ by Corollary \ref{cor:penultimate}, we obtain $M_L[j,i]  + r \lambda_i  W_L[j,i]  = 0 $.
Letting $\Lambda$ be the diagonal matrix with diagonal entries $\lambda_i$ (if $M_L[:,i] = 0$, simply set $\lambda_i = 0$),  we get that $M_L = -r W_L \Lambda$. Note that $p$ itself lies in the image of the Jacobian of the network parametrization with the last layer removed
(specifically, it is the image of $(0, \ldots, 0, W_{L-1})$ under that Jacobian).
Thus, the identity $h - \Lambda p = 0$ shows that $(M_1,\ldots,M_{L-2}, M_{L-1}-\Lambda W_{L-1})$
lies in the kernel of the Jacobian of the truncated network. 
Since $M = (M_1,\ldots,M_{L-2}, M_{L-1}-\Lambda W_{L-1},0) + (0, \ldots, 0, \Lambda W_{L-1}, -r W_L \Lambda)$, the desired statement follows then from the inductive hypothesis. 
\end{proof}


Finally, we prove Theorem \ref{thm:maincrit}. 
\begin{proof}[Proof of Theorem \ref{thm:maincrit}]
We have argued already, directly after the statement of Theorem \ref{thm:maincrit}, that subnetworks are critical points. For the converse direction, assume that $\mathbf{W}$ is not a subnetwork. Proposition \ref{thm:main_multilayer} implies that the kernel of the Jacobian $\textnormal{J}_\mathbf{W} \varphi$ has dimension $\sum_{i=1}^{L-1}d_i$, due to the degrees of freedom in choosing the diagonal matrices $\Lambda$. By Theorem \ref{thm:finkel}, this dimension coincides with the one of the generic fiber of $\varphi$, implying that $\mathbf{W}$ is a regular point for $\varphi$ by the fiber-dimension theorem.  
\end{proof}

\section{Conclusions and Future Work}
In this work, we have considered deep neural networks with a monomial activation function and, via tools from polynomial algebra, established an equivalence (for large activation degree)
between critical points of the parametrization and subnetworks. Since critical points are known to be related to implicit biases (e.g., via SLT), this offers a mathematical justification for the (implicit) Occam's Razor of deep neural networks. 

A core natural question for future investigation is whether the relation between subnetworks and criticality extends to other scenarios.
It would be particularly interesting to show this result for generic polynomial activations of sufficiently large degree. Since polynomials can approximate arbitrary activation functions (by the Stone--Weierstra\ss{} Theorem), a meta-belief of Neuroalgebraic Geometry is that the behavior of networks with generic polynomial activations is representative for the behavior of networks with sufficiently general non-polynomial activations. 
Techniques similar to \cite{shahverdi2026learning} can potentially be applied to polynomial non-monomial networks. 
For specific other activations of interest, such as ReLU, the network might behave differently than in the (generic) polynomial case. In fact, similar problems are known to be extremely challenging in this case \cite{grigsby2023hidden, grillo2026most, bona2023parameter}, and a clean equivalence between criticality and subnetworks might not hold. 

On a different note, architectures other than fully-connected networks should be explored. 
Whether subnetworks are equivalent to critical parameters or not, depends highly on the chosen architecture. For instance, for convolutional networks with either monomial or generic polynomial activation, the network parametrization has no critical points (except for parameters that produce the zero-function) \cite{shahverdi2024geometryoptimizationpolynomialconvolutional,shahverdi2026learning}.
Important examples where critical points have not been fully classified are ResNets and attention-based networks. The latter are pivotal in a vast array of applications, and have been previously studied, in their un-normalized form, via algebraic tools \cite{henry2024geometrylightningselfattentionidentifiability}.  


\section*{Acknowledgements}
GLM, VS, and KK were  supported by the Wallenberg AI, Autonomous Systems and Software Program (WASP) funded by the Knut and Alice Wallenberg Foundation. WW was supported by the ERC Consolidator Grant FourSurf 101087365.

\bibliography{main}
\bibliographystyle{alpha}

\newpage

\appendix

\section{Identifiability of Monomial Networks}
\label{sec:app_ident}
Here, we state a stronger version of the main result of \citep{finkel2024activation}, with a similar proof. Similarly to Section \ref{sec:mainres}, we will work with complex scalars, resulting in a stronger statement than over $\mathbb{R}$. We will need the following general result from polynomial algebra. 

\begin{thm}[\cite{newman1979waring, finkel2024activation}]
\label{thm:newslat}
Let $U_1, \dots, U_n$, $n\geq 2$ be multivariate complex polynomials that are non-vanishing and pairwise non-proportional. If $r > n(n-2)$, then $U_1^r, \ldots, U_n^r$ are linearly independent. 
\end{thm}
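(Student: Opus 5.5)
The natural route is to deduce the statement from Mason's Theorem in the generalized form of Theorem \ref{thm:genmason}. Suppose that $\sum_{i=1}^n c_i U_i^r = 0$ is a nontrivial linear relation. Drop the terms with $c_i = 0$ and pass to a minimal vanishing subsum, indexed by $S \subseteq \{1,\ldots,n\}$. Because each $c_i U_i^r$ in it is nonzero, we have $|S| \geq 2$. If $|S| = 2$, the relation reads $c_1U_1^r = -c_2U_2^r$. Then $U_1^r/U_2^r$ is a constant, and unique factorization forces $U_1/U_2$ to be constant, contradicting non-proportionality. So we may assume $3 \leq m := |S| \leq n$.

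Next I would remove common factors. Let $g = \gcd(U_i \colon i \in S)$ and set $\hat U_i = U_i/g$. Dividing the relation by $g^r$ gives a minimal vanishing sum $\sum_{i\in S} c_i \hat U_i^r = 0$. Its terms are coprime as a family, and the $\hat U_i$ are still pairwise non-proportional. Not all $\hat U_i$ are constant: if they were, any two of them would be proportional. We can therefore apply Theorem \ref{thm:genmason} to the terms $c_i\hat U_i^r$. The radical of $c_i\hat U_i^r$ equals the radical of $\hat U_i$, whose degree is at most $\deg \hat U_i \leq N := \max_{i\in S}\deg \hat U_i$. This yields
\begin{equation*}
rN \leq (m-2)\left(mN - 1\right) < m(m-2)N.
\end{equation*}
Here $N \geq 1$, since some $\hat U_i$ is nonconstant. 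Dividing by $N$ gives $r < m(m-2) \leq n(n-2)$, which contradicts the hypothesis $r > n(n-2)$. Hence no nontrivial relation exists, and the $U_i^r$ are linearly independent.

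The main step to check carefully is the passage to the coprime minimal subsum, in particular that the hypotheses of Theorem \ref{thm:genmason} really hold after dividing by $g^r$. The gcd of the $\hat U_i^r$ is $1$ because the $\hat U_i$ have trivial common gcd; this is the reason for dividing by $g$ rather than by the gcd of the $r$-th powers directly. Minimality is inherited, since any vanishing proper subsum after division would give one before division. One further point: with $m$ terms the theorem's factor is $(m-2)$ times a sum of $m$ radical degrees, so the bound comes out as $m(m-2)$ and not something larger. This matches the threshold $n(n-2)$ in the statement.
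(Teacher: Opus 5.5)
Your proof is correct and follows the route the paper indicates: the paper gives no detailed proof of Theorem \ref{thm:newslat}, remarking only that it follows straightforwardly from the generalized Mason theorem (Theorem \ref{thm:genmason}), as in \cite{cruaciun2025linear}. Your argument supplies exactly those details: you pass to a minimal vanishing subsum, divide by $g^r$ to obtain a coprime family, and bound each radical degree by $N$ to reach $rN \leq (m-2)(mN-1)$.
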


The above result can be deduced, straightforwardly, from Theorem \ref{thm:genmason} -- this is the same strategy as in \cite{cruaciun2025linear}. When $n=3$, the above result can be interpreted as a polynomial version of Fermat's Last Theorem.

\begin{thm}
\label{thm:finkel}
Suppose that $r > 4D(D -1)$. Let $\mathbf{W}, \mathbf{V} \in \mathcal{W}$, and assume that $\mathbf{W}$ is not a subnetwork. If $f_\mathbf{W}(x) = f_{\mathbf{V}}(x) $ for all $x \in \mathbb{R}^{d_0}$, then 
\begin{equation}
\label{eq:fiberdes}
\begin{aligned}
    V_1 &= P_1D_1W_1 \\
    V_i &= P_i D_iW_i D_{i-1}^{- r} P_{i-1}^\top \hspace{5em} i=2, \ldots, L-1 \\
    V_L &= W_L D_{L-1}^{-r} P_{L-1}^\top,
\end{aligned}
\end{equation}
where $P_i$ is a $d_{i} \times d_{i}$ permutation matrix, and $D_i$ is a diagonal invertible $d_{i} \times d_{i}$ matrix. 
\end{thm}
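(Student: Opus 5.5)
We will argue by induction on the depth $L$, peeling off the last layer and using Theorem \ref{thm:newslat} to identify the hidden neurons of the penultimate layer. For $L=1$ there is nothing to prove, since $V_1 = W_1$ follows directly from $f_\mathbf{V}=f_\mathbf{W}$ (take $P_1$ and $D_1$ trivial; the formula for $V_L$ with $D_0$ is vacuous). For $L>1$, write $p_1,\dots,p_{d_{L-1}}$ for the outputs of the penultimate layer of $\mathbf{W}$ and $q_1,\dots,q_{d_{L-1}}$ for those of $\mathbf{V}$. Then $f_\mathbf{W}=W_Lp^r$ and $f_\mathbf{V}=V_Lq^r$. The first step is to show that, because $\mathbf{W}$ is not a subnetwork, the $p_i$ are nonzero and pairwise non-proportional. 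This should be an inductive statement: non-proportional, nonzero polynomials remain so after applying a layer with no zero rows and no proportional rows, by Theorem \ref{thm:newslat}. This is what the paper calls Corollary \ref{cor:penultimate}, which I would prove along the way.

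Next we compare the two expansions. Merge the $q_k$ into classes of proportional polynomials and discard those that are zero, so that $V_Lq^r$ becomes $\sum_k V'_L[:,k]\,\tilde q_k^r$ with the $\tilde q_k$ pairwise non-proportional. For each output coordinate, $\sum_i W_L[\cdot,i]p_i^r-\sum_k V'_L[\cdot,k]\tilde q_k^r=0$. Group together all the $p_i$ and $\tilde q_k$ that are mutually proportional, producing at most $2d_{L-1}\le 2D$ pairwise non-proportional polynomials. We need $r>2D(2D-2)=4D(D-1)$, which is exactly the hypothesis, and then Theorem \ref{thm:newslat} says each grouped coefficient vanishes. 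Since every column of $W_L$ is nonzero (no inactive neurons), each $p_i$ must be proportional to some $\tilde q_k$. The $p_i$ are pairwise non-proportional and there are at most $d_{L-1}$ nonzero classes of $q$'s, so this matching is a bijection. Hence no $q_k$ vanishes or is redundant, and $q = P_{L-1}\Delta p$ for a permutation $P_{L-1}$ and an invertible diagonal $\Delta$. Comparing coefficients then gives $V_L=W_L D_{L-1}^{-r}P_{L-1}^\top$ after adjusting $D_{L-1}$ to absorb the choice of $r$-th roots; here we use that over $\mathbb{C}$ every scalar has an $r$-th root, and that $\Delta^r$ determines $D_{L-1}$ up to roots of unity, which we fold into the choice.

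Then we pass to the truncated networks. The identity $q=P_{L-1}D_{L-1}p$ means that the network $(V_1,\dots,V_{L-1})$ and the network $(W_1,\dots,W_{L-2},P_{L-1}D_{L-1}W_{L-1})$ compute the same map. The latter is not a subnetwork, since $\mathbf{W}$ is not one and permuting or rescaling rows preserves this. Its width is bounded by $D$, so the hypothesis on $r$ persists. Applying the inductive hypothesis to it yields the formulas for $V_1,\dots,V_{L-1}$: the last layer absorbs $P_{L-1}D_{L-1}$, and the earlier layers acquire $P_iD_i$ factors together with the compensating $D_{i-1}^{-r}P_{i-1}^\top$.

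The main obstacle I expect is the matching step. We must handle zero or proportional $q_k$, which are allowed because $\mathbf{V}$ may be a subnetwork, and we must make sure the count of distinct polynomials stays at most $2D$ so that the bound $r>4D(D-1)$ suffices. There is also a technical point in the inductive step with the base case $L-1=1$, where the statement degenerates to equality of linear maps. Everything else is bookkeeping of the diagonal and permutation factors.
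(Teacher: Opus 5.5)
Your proof is correct. Its skeleton coincides with the paper's:
\begin{itemize}
\item induction on depth;
\item Theorem \ref{thm:newslat} applied to at most $2d_{L-1}\le 2D$ grouped terms, which is exactly where $r>4D(D-1)$ enters;
\item the injective, hence bijective, matching of the $p_i$ with the $q_k$, followed by coefficient comparison;
\item the inductive call on $(W_1,\dots,W_{L-2},P_{L-1}D_{L-1}W_{L-1})$ versus $(V_1,\dots,V_{L-1})$.
\end{itemize}

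The genuine difference is how you obtain Corollary \ref{cor:penultimate}. The paper deduces that the $p_j$ are nonzero and pairwise non-proportional from the inductive hypothesis itself. It regards $p_j$ (resp.\ $p_j-\alpha p_k$) as a one-output network of depth $L-1$. It then prunes the zero entries of $W_{L-1}[j,:]$, together with the resulting cascade of inactive neurons in earlier layers, to restore the non-subnetwork condition. Finally it compares with the all-zero network.

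You instead run a forward induction through the hidden layers. The forms $W_1x$ are nonzero and pairwise non-proportional. If the preactivations $u$ of hidden layer $j$ have this property, then the $u_l^r$ are linearly independent by Theorem \ref{thm:newslat}. So $w\mapsto w\cdot u^r$ is injective, and the nonzero, pairwise non-proportional rows of $W_{j+1}$ yield nonzero, pairwise non-proportional preactivations. This argument:
\begin{itemize}
\item is self-contained and avoids the pruning bookkeeping entirely;
\item needs only $r>D(D-2)$ for this part;
\item simultaneously supplies the linear independence of the $p_i^r$ used in the coefficient comparison. The paper cites Theorem \ref{thm:finkel} for this at that point, which is presumably a slip for Theorem \ref{thm:newslat}.
\end{itemize}
The paper's route has only the mild advantage of staying inside a single induction on the identifiability statement, and it is more delicate.

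One small correction: no $r$-th roots or roots of unity are involved. Set $D_{L-1}:=\Delta$. Since coordinatewise powers commute with permutations, $q^r=P_{L-1}\Delta^r p^r$. Linear independence of the $p_i^r$ then gives $V_L=W_L\Delta^{-r}P_{L-1}^\top$ directly. Choosing a different $D_{L-1}$ with the same $r$-th power would actually break the identity $q=P_{L-1}D_{L-1}p$ on which your truncation step relies.
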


\begin{proof}
We will proceed by induction on the depth $L$. For $L = 1$, the statement is straightforward. We will thus assume $L> 1$. 

 Denote by $p_j$ the output of the $j$-th neuron in the penultimate layer of $f_\mathbf{W}$. We claim that the $p_j$ are non-vanishing. This can be deduced from the inductive hypothesis, since $p_j$ is an MLP with architecture $(d_0, \ldots, d_{L-2}, 1)$ and weights $(W_1, \ldots, W_{L-2}, W_{L-1}[j, :])$. A subtlety is that, even though $\mathbf{W}$ is not a subnetwork, the weights of $p_j$ might be, since some entries of $W_{L-1}[j,:]$ can vanish (but not all of them), resulting in inactive neurons at the $(L-2)$-th layer. However, as explained in Section \ref{sec:backident}, these weights can be reduced. Specifically, we remove the vanishing entries from $W_{L-1}[j,:]$, together with the corresponding rows of $W_{L-2}$. This operation can, potentially, generate some (but not all) redundant neurons with vanishing outcoming weights at the $(L-3)$-th layer\footnote{Note that this operation can not generate redundant neurons, nor neurons with vanishing incoming weights.}. By iteratively removing such redundant neurons at early layers, we obtain an MLP with a smaller architecture and non-subnetwork weights, parametrizing the polynomial $p_j$ and with $W_{L-1}[j,:] \neq  0$. If $p_j$ were 0, it could be  realized both by the non-subnetwork weights we just constructed and by the all vanishing weights ($f_\mathbf{0} = 0$), which contradicts the inductive hypothesis.
 
We now claim that the $p_j$ are pairwise non-proportional. Given $j \neq k$, suppose that $p_j = \alpha \ p_k$ for some  $\alpha \in \mathbb{C}$ and consider the MLP with architecture $(d_0, \ldots, d_{L-2}, 1)$ and weights $(W_1, \ldots, W_{L-2}, W_{L-1}[j, :] - \alpha \ W_{L-1}[k, :])$, which parametrizes the zero polynomial. By reasoning as above, due to $W_{L-1}[j, :] \neq \alpha \  W_{L-1}[k, :]$, the induction hypothesis results, again, in a contradiction.

Suppose now that $f_\mathbf{W} = f_{\mathbf{V}}$. Then for all $i = 1, \ldots, d_L$, we have: 
\begin{equation}
\label{eq:mlpident}
    \sum_{j=1}^{d_{L-1}} W_L[i,j] \ p_j^r -  \sum_{k=1}^{d_{L-1}} V_L[i,k] \  q_k^r = 0,  
\end{equation}
where $q_k$ is the output of the $k$-th neuron in the penultimate layer of $f_\mathbf{V}$. Since $W_L$ has no vanishing columns, for every $j = 1, \ldots, d_{L-1}$  there exists $i = 1, \ldots, d_L$ such that $W_L[i, j] \not = 0$. For any such $i$, we apply Theorem \ref{thm:newslat} (with $n \leq 2d_{L-1} \leq 2D$) to the vanishing linear combination \eqref{eq:mlpident}, after potentially removing vanishing terms and grouping proportional ones in the sum indexed by $k$. We conclude that there exists $k = 1, \ldots, d_{L-1}$ such that $  p_j = \lambda_j \ q_k$ for some $\lambda_j \in \mathbb{C}$. Since the $p_j$ are pairwise non-proportional and non-zero, two different indices $j$ cannot be associated with the same $k$. Thus, the map $j\mapsto k$ is injective, and
hence bijective since both index sets have cardinality $d_{L-1}$. We set $P_{L-1}$ as the corresponding permutation matrix, and $D_{L-1}$ as the diagonal matrix whose $j$-th entry is $\lambda_j^{-1}$.
That way, we obtain $q = P_{L-1}D_{L-1}p$.
Moreover, since the $p_j^r$ (resp. the $q_k^r$) are linearly independent by Theorem \ref{thm:finkel}, the coefficients in the linear combination $\sum_j W_L[i,j] p_j^r$ (resp. $\sum_k V_L[i,k] q_k^r$) are uniquely determined, yielding $V_L = W_L D_{L-1}^{-r} P_{L-1}^\top$.
Now we apply the induction hypothesis to the architecture $(d_0, \ldots, d_{L-1})$ with  weights $\mathbf{W'} := (W_1, \ldots, W_{L-2}, P_{L-1}D_{L-1}W_{L-1})$ and $\mathbf{V'} := (V_1, \ldots, V_{L-1})$, which give rise to the same function due to $q = P_{L-1}D_{L-1}p$. 
That way, we obtain permutation and diagonal matrices $P_t$ and $D_t$ for $t = 1, \ldots, L-2$ such that \eqref{eq:fiberdes} holds, as desired. 
\end{proof}

\begin{cor}\label{cor:penultimate}
    Suppose that $r > 4D(D-1)$ and that $\mathbf{W}$ is not a subnetwork. The output polynomials of the penultimate layer of $f_{\mathbf{W}}$ are non-zero and pairwise non-proportional.
\end{cor}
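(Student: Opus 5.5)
The corollary is a restatement of the two intermediate claims established inside the proof of Theorem \ref{thm:finkel}. The plan is to rerun those two arguments at the level of a network of depth $L$, using Theorem \ref{thm:finkel} itself, applied to the truncated network of depth $L-1$, as the black box that replaces the inductive hypothesis.

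\textbf{Non-vanishing.} Fix $j \in \{1,\ldots,d_{L-1}\}$. The polynomial $p_j$ is realized by the architecture $(d_0,\ldots,d_{L-2},1)$ with weights $(W_1,\ldots,W_{L-2},W_{L-1}[j,:])$, and $W_{L-1}[j,:]\neq 0$ because $\mathbf{W}$ is not a subnetwork. These weights may nonetheless be a subnetwork, since some entries of $W_{L-1}[j,:]$ can vanish.
\begin{enumerate}
\item I reduce them as in the proof of Theorem \ref{thm:finkel}. I delete the zero entries of $W_{L-1}[j,:]$ together with the corresponding rows of $W_{L-2}$.
\item I then iteratively delete the neurons at earlier layers whose outgoing weights have become entirely zero.
\item As footnoted in the paper, this never creates redundancy or vanishing incoming weights. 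The result is non-subnetwork weights $\mathbf{W}'$ of smaller width, hence still satisfying $r>4D'(D'-1)$, with a nonzero last layer, and realizing $p_j$.
\end{enumerate}
If $p_j=0$, then $f_{\mathbf{W}'}=f_{\mathbf{0}}$. Theorem \ref{thm:finkel} applied to $\mathbf{W}'$ would then force $V_L=W'_L D^{-r}P^\top$ with $V_L=0$, contradicting $W'_L\neq0$.

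\textbf{Pairwise non-proportionality.} Suppose $p_j=\alpha p_k$ with $j\neq k$. Then the weights $(W_1,\ldots,W_{L-2},\,W_{L-1}[j,:]-\alpha W_{L-1}[k,:])$ realize the zero polynomial. Their last row is nonzero, because otherwise neuron $j$ would be redundant or inactive at layer $L-1$. Performing the same reduction as above yields a non-subnetwork with nonzero last layer realizing $0$, which is the same contradiction.

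\textbf{Main obstacle.} The delicate point is verifying that the pruning procedure really terminates in a non-subnetwork. Deleting neurons with zero outgoing weights could in principle leave some neuron at layer $t$ with all remaining outgoing weights zero; it is then removed in turn. The incoming rows of the surviving neurons are untouched, so no new proportional rows or zero incoming rows can arise. Hence the only defects that ever appear are zero outgoing columns, and the iteration removes all of them. The last-layer row keeps its nonzero entries throughout, so the contradiction argument above applies.
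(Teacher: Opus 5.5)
This is correct and is essentially the paper's proof. The paper establishes the corollary inside the inductive proof of Theorem~\ref{thm:finkel}: it applies the depth-$(L-1)$ case to the pruned network realizing $p_j$ (resp.\ with last row $W_{L-1}[j,:]-\alpha W_{L-1}[k,:]$) and derives a contradiction with $f_{\mathbf{0}}=0$, exactly as you do. One precision for your pruning paragraph: deleting neuron $i$ at layer $t$ does alter the incoming rows at layer $t+1$, but only by dropping their zero $i$-th coordinate, which preserves nonvanishing and pairwise non-proportionality and hence also ensures no layer is ever emptied.
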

\begin{proof}
    This is proven inside the above proof of Theorem \ref{thm:finkel}.
\end{proof}

\end{document}